\documentclass{article}



     \usepackage[preprint]{neurips_2019}



\usepackage[utf8]{inputenc} 
\usepackage[T1]{fontenc}    
\usepackage{hyperref}       
\usepackage{url}            
\usepackage{booktabs}       
\usepackage{amsfonts}       
\usepackage{nicefrac}       
\usepackage{microtype}      
\bibliographystyle{agsm}
\usepackage{color}

\usepackage{caption}
\usepackage{subcaption}
\usepackage{float}
\usepackage{amsmath} 
\usepackage{amsthm}

\newtheorem{theorem}{Theorem}

\newtheorem{corollary}{Corollary}

\theoremstyle{remark}

\newtheorem*{remark}{Remark}

\newcommand{\HH}{\mathrm {H}}

\newcommand{\No}{\text{No}}

\newcommand{\E}{\mathbb{E}}

\newcommand{\Cov}{\mathrm{Cov}}
\newcommand{\be}{\begin{equation*}
  \begin{aligned}}
    
\newcommand{\ee}{ \end{aligned}
\end{equation*}
}      

\newcommand{\bel}{\begin{equation}
  \begin{aligned}}
    
\newcommand{\eel}{ \end{aligned}
\end{equation}
}

\usepackage{graphicx}
\graphicspath{ {./images/} }

\title{Tuning-Free Disentanglement\\ via Projection}

%

\author{%
   Yue Bai 
   \\
  Department of Statistics \\
  University of Florida\\
  Gainesville, FL  32603\\
  \texttt{baiyue@ufl.edu} \\
  \And
  Leo L. Duan \\
  Department of Statistics\\ 
  University of Florida\\
  Gainesville, FL  32603\\
  \texttt{li.duan@ufl.edu} \\
}

\begin{document}

\maketitle

\begin{abstract}
In representation learning and non-linear dimension reduction, there is a huge interest to learn the ``disentangled'' latent variables, where each sub-coordinate almost uniquely controls a facet of the observed data.  While many regularization approaches have been proposed on  variational autoencoders, heuristic tuning is required to balance between disentanglement and loss in reconstruction accuracy --- due to the unsupervised nature, there is no principled way to find an optimal weight for regularization. Motivated to completely bypass regularization, we consider a projection strategy: modifying the canonical Gaussian encoder, we add a layer of scaling and rotation to the Gaussian mean, such that the marginal correlations among latent sub-coordinates become exactly zero. 
This achieves a theoretically maximal disentanglement, as guaranteed by zero cross-correlation between one latent sub-coordinate and the observed varying with the rest. Unlike regularizations, the extra projection layer does not impact the flexibility of the previous encoder layers, leading to almost no loss in expressiveness. This approach is simple to implement in practice. Our numerical experiments demonstrate very good performance, with no tuning required.
 \end{abstract}

\section{Introduction}

Unsupervised dimension reduction aims to learn a latent low-dimension representation \(z_i \in \mathbb{R}^d\) from the observed data \(x_i \in \mathbb{R}^p\) with \(d\ll p\). In a generative model framework, one can imagine that the observed and latent are linked via a likelihood $p[x_i \mid f^*(z_i)]$ along with a prior $p(z_i)$; to simplify estimating the posterior distribution of $z_i$, \citet{kingma2013auto} proposes a variational distribution \(q[z_i\mid f(x_i)]\); by letting \(f\) and \(f^*\) be neural networks, this has an interesting connection to the classical autoencoder, and is named as the variational autoencoder (VAE). The canonical VAE loss function is
\[- \mathbb{E}_{q[z_i \mid f(x_i)]} \log p[ x_i \mid f^*(z_i)] + KL\{q[z_i \mid f(x_i)]|| p(z_i)\}\]
where \(KL\) denotes the Kullback-Leibler (KL) divergence. 
VAE has generated a huge interest as it was discovered to have a ``disentangling'' effect --- in particular, when the prior $p(z_i)$ follows an independent form, such as independent multivariate Gaussian, in the variational posterior, each sub-coordinate  $z_{i,1},\ldots, z_{i,d}$ can sometime correspond to a unique facet of the $x_i$. This provides appealing real-world interpretation.

Intuitively, this effect is due to the regularization of the KL divergence term. To pull the posterior closer to the independent structure, it is natural to modify this term.
There is an active literature. Among others, \(\beta\)-VAE \citep{higgins2017beta} increases the weight of the KL divergence to \(\beta>1\), with \(p(z_i)=\mathcal{N}(z_i ; 0,I)\); 
FactorVAE \citep{kim2018disentangling} considers total correlation regularization on the marginal \(q(z_i)\) towards a factorized form \(\prod_{k=1}^d q(z_{i,k} )\);   readers can find alternative approaches such as InfoGAN \citep{chen2016infogan} and Deep InfoMax \citep{hjelm2018learning}, although we will focus on reviewing VAEs here for concise exposition.

It is found that in order to have perceivable disentanglement, the weight of regularizer needs to be ``sufficiently large''. This is understandable, because of the high complexity of decoder $f^*$, it is hard to guarantee that the near-independence of $z$ can pass to $f^*(z)$. On the other hand, too much regularization can lead to deteriorated performance and large reconstruction error between $f^*(z_i)$ and $x_i$.  Arguably, the primary reason is that the regularization target is a deteriorated representation of the data, compared to the original $q[z_i \mid f(x_i)]$ obtained from end-to-end training. For example, standard multivariate Gaussian prior  \(p(z_i)\) KL divergence has no information from \(x_i\) at all; independent $q(z_i)$ in total correlation penalty is intractable and requires approximation. 
\cite{achille2018emergence} provides a careful quantification on the disentanglement-accuracy tradeoff in information theory; on the other hand, there is a lack of statistical theory and tuning is mostly heuristic.

Conceptually related to our projection approach, there are regularizations applied on the Gaussian covariance among the latent sub-coordinates \citep{cheung2014discovering,cogswell2015reducing,kumar2017variational}; roughly speaking, they penalize the deviation of correlations/covariances among $z_{.,1},\ldots, z_{.,d}$ from zero, as a tractable surrogate target for independence (for the later, mutual information is known to have sensitivity issue). However, there are two critical issues in those approaches: 1. the correlation regularization is applied on the {\em whole} encoder $f$ --- forcing a nonlinear function to satisfy a set of linear constraints, it negatively impacts its expressiveness; 2. there is a numerical error issue in penalizing non-zero sample covariance, which as an estimator converges approximately $O(n^{-1/2})$ \citep{gotze2010rate} --- for example, two independent Gaussian $z_{i,1}$ and $z_{i,2}$ with $n=10^6$ will still show a non-zero correlation around $10^{-3}$; this means larger penalty is not meaningful, even though it incurs more reconstruction error.

Realizing the problems with regularization, we propose a fundamentally different strategy. Our key idea is to induce disentanglement directly in $q[z_i\mid f(x_i)]$. Motivated by Stein's lemma \citep{stein1981estimation}, we put the latent variable in a constrained space with {\em exactly} zero marginal correlations, this leads to no cross-correlation  between one $z_{i,k}$ and the decoded image that varies with other $z_{i,k'}$'s. Practically, this representation can be easily obtained with an extra projection layer of scaling and rotation on the Gaussian encoder mean, and letting it go through end-to-end training. Intuitively, the last-layer projection does not impact the flexibility of previous layers, retaining the expressiveness of the encoder.

\section{Method}

\subsection{Background}
The standard VAE \citep{kingma2013auto} has the following variation distribution (represented by $q [z_i \mid f(x_i)] $ in the introduction):
\begin{equation}\label{eq:vanilla_vae}
        z_i\mid g(x_i),h(x_i) \sim \No [ g(x_i), h(x_i) ]
\end{equation}
where $g:  \mathbb{R}^p \to  \mathbb{R}^d$ outputs a mean vector and $h:  \mathbb{R}^p \to \mathbb{R}^{d}$ outputs a diagonal covariance;  we will use $g_k$ to denote the $k$th element in the output of $g$.

Even though $q[z_i \mid f(x_i)]$ has conditionally independent form due to the diagonal covariance, the marginal $q(z_i)$ does not --- based on the law of total covariance, the marginal covariance for $k\neq k'$
\be
 \Cov (z_{i,k}, z_{i,k'}) =&
\mathbb{E}_X \Cov_Z(z_{i,k}, z_{i,k'} \mid x_i) 
+\Cov_X \big[ \mathbb{E}_Z(z_{i,k} \mid x_i),  \mathbb{E}(z_{i,k'} \mid x_i) \big ]
\\ =& \Cov_X [ g_k(x_i), g_{k'}(x_i)],
 \ee
 where we use capital $Z$ and $X$ to represent the random variables, based on the variational and empirical distributions, respectively. As quantified later, non-zero latent covariance can lead to entanglement.

\subsection{Projection-VAE}
We consider an alternative mapping for the Gaussian mean $g^*$ in a constrained space
\be
\mathcal{G}^*=\{ g^*: \Cov_X [ g^*_k(x_i), g^*_{k'}(x_i)]=0, \forall k\neq k'\}.
\ee
To obtain $g^*$ as a projection of $g$ into $\mathcal{G}$,
we minimize the distance to the original $g$ in their means $\|\mathbb{E}_Xg^*(x_i)-\mathbb{E}_Xg(x_i)\|$, this leads to a simple closed form solution
\be
g^*(x_i)= L'  [g(x_i)-  \mu_g ]+ \mu_g
\ee
where $\mu_{g} = \mathbb{E}_{X}g(X)=n^{-1}\sum_i g(x_i), \Sigma_g = \mathbb{E}_{X} [g(X) - \mu_{g}] [g(X) - \mu_{g}]'=n^{-1}\sum_i [g(x_i)- \mu_g][ g(x_i)- \mu_g]'$. And the $d\times d$ matrix $L$ is from the eigendecomposition
\be
\Sigma_g=UDU',\qquad L=UD_*^{1/2}
\ee
where $D_* = (D+I \epsilon)^{-1}$ is a diagonal matrix, with $\epsilon> 0 $ small in case $\Sigma_g$ is rank deficient (otherwise we set $\epsilon=0$).

Multiplying $U$ and $D_*^{1/2}$ correspond to rotation and scaling. And we can verify that marginal
\be
 \Cov [ z_{i,k},  z_{i,k'}]=0
 \ee
 for $k\neq k'$, and diagonal variance $(D_*^{1/2}DD_*^{1/2})_{k,k} + \mathbb{E}_X h_k(X)$.

To implement this projection, it only requires one extra linear transform layer in the decoder (Figure~\ref{fig:arch}). Denoting this layer by $m$ (hence $g^*(x)=m[g(x)]$), we have a projected variational distribution
\begin{equation}\label{eq:proj}
        z_i\mid m[g(x_i)], h(x_i) \sim \No \big \{  m[g(x_i)],    h(x_i)  \big \}, \quad  m[g(x_i)]= L^{-1}  [g(x_i)-  \mu_g ]+ \mu_g.
\end{equation}
We refer to this method as Proj-VAE. Notice the loss function is similar to the standard VAE
\[- \mathbb{E}_{q\{ z_i \mid m[g(x_i)], h(x_i)    \} } \log p[ x_i \mid f^*(z_i)] + KL\{q[z_i \mid m[g(x_i)], h(x_i)]|| p(z_i)\}.\]
For simplicity, we keep prior $p(z_i)$ the same as $\No(0,I)$, as in the canonical VAE.

Since the eigen-decomposition, rotation and scaling are all differentiable, the computation can be easily carried out using the same end-to-end training with re-parameterization trick \citep{kingma2013auto}.

\begin{figure}[H]
     \centering
         \centering
         \includegraphics[width=.7\linewidth, height=1.7in]{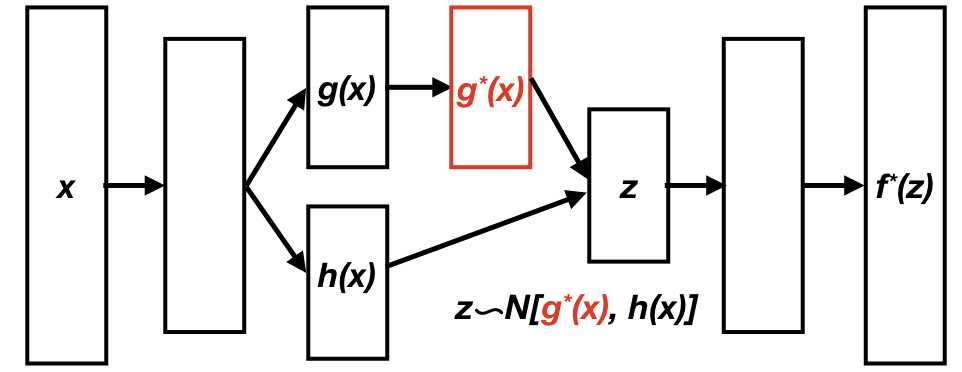}
         \caption{Illustration of network architecture of projection-VAE. It only takes a simple linear projection (red) on the canonical VAE, but ensures the variational distribution has marginal $\Cov (z_{i,k}, z_{i,k'})=0$ exactly for $k\neq k'$, benefiting disentanglement in $f^*(z)$ via Stein's covariance identity. \label{fig:arch}}
\end{figure}

\subsection{Guarantee of Maximal Disentanglement}
We now justify why exactly zero correlation is vital for maximal amount of disentanglement.

In order to do so, we need to first formalize the idea of ``disentanglement'', by rephrasing maximal disentanglement as ``zero entanglement'' --- specifically, one latent sub-coordinate {\em uniquely} controlling one observed facet, is equivalent to the other observed facets {\em not depending} on this sub-coordinate. Denoting latent variable as a random vector $\vec Z = (Z_1,\ldots, Z_d)$, we consider another vector by fixing the $k$th sub-coordinate to constant $c_k$
\be
& \vec Z^{\setminus k} := 
(Z_1,\ldots, Z_{k-1}, c_k, Z_{k+1},\ldots, Z_d),
\ee
we can now quantify the dependence via the cross-covariance between $Z_k$ and $f^*(\vec Z^{\setminus k})$.
\bel\label{eq:entanglement_measure}
\text{Cross-Cov}[  Z_k,  f^*( \vec Z^{\setminus k} ) ] = \mathbb{E}[Z_k -   \mathbb{E}Z_k][f^*( \vec Z^{\setminus k} ) -  \mathbb{E} f^*( \vec Z^{\setminus k} )].
\eel
Therefore, the closer \eqref{eq:entanglement_measure} is to zero, the larger the disentanglement is.

Despite the complexity of the decoder $f^*$, we can analytically compute the cross-covariance using  Stein's covariance identity \citep{stein1981estimation}. We first start with the Gaussian assumption, and quantifiy the mathematical source of entanglement; then we show removing latent correlation leads to zero entanglement, which can be extended to a broad class of continuous distributions.

\begin{theorem}[Entanglement under Gaussian encoder]
Let \(\vec{Z}=(Z_1,\dots,Z_d)'\) be a Gaussian random vector. Suppose \(f^*\) is a neural network  such that \(\partial f^* / \partial z_k\) is continuous almost everywhere and \(\E\vert  \partial  f_q^*(\vec Z)  /\partial z_k \vert<\infty\) for $q=1,\ldots, p$, then
\bel \label{eq:breakdown_entanglement}
\textup{Cross-Cov}[ Z_{k},  f^*_q( \vec{Z}^{\setminus k}) ] = \sum_{l=1: l\neq k}^d \Cov(Z_{k},Z_{l})\E[\frac{\partial f^*_q(\vec{Z}^{\setminus k} )}{\partial Z_{l}}].
\eel
\end{theorem}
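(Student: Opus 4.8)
The plan is to recognize the right-hand side of \eqref{eq:breakdown_entanglement} as exactly the content of the multivariate Stein covariance identity, applied to a function that happens to be constant in its $k$th argument. Write $\vec Z_{-k}=(Z_1,\dots,Z_{k-1},Z_{k+1},\dots,Z_d)'$, which together with $Z_k$ is jointly Gaussian, and define $\phi:\mathbb{R}^{d-1}\to\mathbb{R}$ by $\phi(\vec z_{-k})=f^*_q(z_1,\dots,z_{k-1},c_k,z_{k+1},\dots,z_d)$, so that $f^*_q(\vec Z^{\setminus k})=\phi(\vec Z_{-k})$ and the quantity in \eqref{eq:entanglement_measure} is literally $\Cov(Z_k,\phi(\vec Z_{-k}))$. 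Viewing $\phi$ as a function $g$ on all of $\mathbb{R}^d$ via $g(\vec z)=\phi(\vec z_{-k})$, we have $\partial g/\partial z_k\equiv 0$ while $\partial g/\partial z_l$ equals $\partial f^*_q/\partial z_l$ evaluated at $\vec z^{\setminus k}$ for $l\neq k$.

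First I would establish the smooth multivariate Stein identity: for $\vec Z\sim\No(\mu,\Sigma)$ and $g\in C^1$ with the stated integrability, $\Cov(Z_k,g(\vec Z))=\sum_{l=1}^d\Cov(Z_k,Z_l)\,\E[\partial g(\vec Z)/\partial z_l]$. This follows by integration by parts against the Gaussian density $\varphi$, using $\nabla\varphi(\vec z)=-\Sigma^{-1}(\vec z-\mu)\varphi(\vec z)$, equivalently $(\vec z-\mu)\varphi(\vec z)=-\Sigma\,\nabla\varphi(\vec z)$, with the boundary terms vanishing by Gaussian tail decay. Applying this to our $g$ and discarding the $l=k$ term (zero, since $g$ is constant in $z_k$) yields precisely \eqref{eq:breakdown_entanglement}.

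The main obstacle is regularity: a neural network $f^*$ with piecewise-linear activations (ReLU and the like) is only differentiable almost everywhere, not $C^1$, so the smooth identity does not apply verbatim. I would handle this by mollification: set $g_\varepsilon=g*\rho_\varepsilon$ for a standard smooth bump $\rho_\varepsilon$, apply the smooth identity to $g_\varepsilon$, and let $\varepsilon\to 0$. Since such $f^*$ is locally Lipschitz, $\partial g_\varepsilon/\partial z_l\to\partial g/\partial z_l$ at every differentiability point of $g$ (hence a.e.) and $g_\varepsilon\to g$ pointwise; the hypotheses $\E|\partial f^*_q(\vec Z)/\partial z_l|<\infty$, together with the fact that the Gaussian has all moments and $f^*$ grows at most linearly, supply the dominating functions needed to pass both $\E[(Z_k-\mu_k)g_\varepsilon(\vec Z)]$ and $\E[\partial g_\varepsilon(\vec Z)/\partial z_l]$ to their limits by dominated convergence. (Alternatively one may simply cite a version of Stein's lemma valid for absolutely continuous / Lipschitz integrands.) One subtlety worth a sentence: the restriction $\phi$ of $f^*_q$ to the hyperplane $\{z_k=c_k\}$ should itself be a.e.\ differentiable; for the piecewise-linear or smooth networks in question the non-differentiability locus of $f^*_q$ is a finite union of lower-dimensional pieces, whose intersection with a fixed hyperplane again has measure zero in that hyperplane, so $\phi$ inherits the needed regularity — and running the mollification directly on $\mathbb{R}^d$ with $g(\vec z)=\phi(\vec z_{-k})$ sidesteps the point entirely.

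Finally I would observe that $\textup{Cross-Cov}[Z_k,f^*_q(\vec Z^{\setminus k})]$ in \eqref{eq:entanglement_measure} is by definition $\Cov(Z_k,f^*_q(\vec Z^{\setminus k}))$, so the displayed chain of equalities is the assertion. The corollary to be used subsequently is immediate: if every latent correlation $\Cov(Z_k,Z_l)$ with $l\neq k$ vanishes, so does the entire cross-covariance, i.e.\ exact decorrelation of the encoder output gives exactly zero entanglement in this sense.
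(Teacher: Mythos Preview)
Your proposal is correct and takes essentially the same route as the paper: both apply the multivariate Stein covariance identity $\Cov(Z_k,g(\vec Z))=\sum_l\Cov(Z_k,Z_l)\,\E[\partial g/\partial z_l]$ to $g(\vec z)=f^*_q(\vec z^{\setminus k})$ and observe that the $l=k$ term drops out because $g$ is constant in $z_k$. The only differences are cosmetic---the paper derives the identity by writing $\vec Z=\mu+L\vec Y$ with $\vec Y\sim\No(0,I)$ and invoking the univariate Stein lemma coordinatewise (citing Liu 1994), whereas you integrate by parts directly against the multivariate density---and your mollification discussion for piecewise-linear $f^*$ supplies regularity details the paper simply assumes.
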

\begin{proof}
The proof is a minor extension based on Lemma 1 from \cite{liu1994siegel}, consider $\vec Z = \mu+ L\vec Y$ with $\vec Y\sim \No[0,I]$, $LL'=\Sigma_z$. Using Stein's lemma,
\be
\Cov[ Y_l,  g_1( \vec{Y}) ] &= \mathbb{E}[\frac{\partial g_1(\vec Y )}{\partial y_{l}}].
\ee
for $l=1,\ldots,d$ and almost everywhere differentiable function $g_1:\mathbb{R}^p\to \mathbb{R}$ such that the right hand side is finite. Taking $g_1(\vec Y)= g_2[( \mu +L \vec Y )]$, we have the $d\times 1$ covariance matrix $\Cov[ \vec Z, g_2(\vec Z)] = \Sigma_z \nabla g_2(\vec Z)$. Using $g_2(\vec Z)=f^*_q(\vec Z^{\setminus k})$, we have $\partial g_2 / \partial z_k=0$; calculating the matrix product yields the result.
  \end{proof}
  \begin{remark}
        the linear form of \eqref{eq:breakdown_entanglement}  shows an intuitive decomposition of entanglement:  each latent $Z_l$ has some influence on the decoder output (quantified as the partial derivative), and the latent covariance acts as the weight coefficient to pass the influence from $Z_l$ and intertwine with $Z_k$. Clearly, zero covariances among $Z_k$'s completely remove the cross-covariance.
\end{remark}

\begin{corollary}
When the condition in Theorem 1 is satisfied and $\textup{Cov}(Z_k, Z_{l})=0$ for any $k\neq l$, then $\textup{Cross-Cov}[Z_k, f^*_q( \vec{Z}^{\setminus k}) ] = 0$ for all $k=1,\ldots, d$ and $q=1,\ldots, p$.
\end{corollary}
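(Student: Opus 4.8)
The plan is to read off the result directly from Theorem~1, since the corollary merely specializes \eqref{eq:breakdown_entanglement} to the decorrelated case. Concretely, I would fix arbitrary indices $k \in \{1,\ldots,d\}$ and $q \in \{1,\ldots,p\}$, observe that the hypotheses of Theorem~1 hold by assumption, and apply the identity
\be
\text{Cross-Cov}[ Z_{k},  f^*_q( \vec{Z}^{\setminus k}) ] = \sum_{l=1: l\neq k}^d \Cov(Z_{k},Z_{l})\,\E\Big[\frac{\partial f^*_q(\vec{Z}^{\setminus k})}{\partial Z_{l}}\Big].
\ee
By hypothesis $\Cov(Z_k,Z_l)=0$ for every $l\neq k$, so each summand is a product of a zero coefficient with a term that, by the integrability assumption $\E\lvert \partial f_q^*(\vec Z)/\partial z_l\rvert<\infty$ carried over from Theorem~1, is a finite real number. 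Hence every summand vanishes, the finite sum is $0$, and since $k$ and $q$ were arbitrary the conclusion follows for all $k$ and $q$.

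The only point that deserves an explicit sentence — and the closest thing to an ``obstacle,'' though it is routine — is justifying that each term is genuinely $0$ rather than an indeterminate $0\cdot\infty$. This is exactly why one invokes the finiteness of $\E[\partial f^*_q(\vec{Z}^{\setminus k})/\partial Z_l]$, which is part of the standing assumption in Theorem~1 and is therefore available here; I would state this explicitly so the reader sees that the hypothesis is doing the work.

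As an optional remark, I might note the alternative route: a Gaussian vector with all pairwise covariances equal to zero has mutually independent coordinates, so $Z_k$ is independent of $\vec Z^{\setminus k}$ (which is a deterministic function of the remaining coordinates), giving $\text{Cross-Cov}=0$ immediately. But the derivation via Theorem~1 is shorter and keeps the exposition self-contained, so that is the version I would write out.
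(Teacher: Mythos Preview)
Your proposal is correct and matches the paper's treatment: the corollary is stated as an immediate consequence of Theorem~1, obtained by substituting $\Cov(Z_k,Z_l)=0$ into \eqref{eq:breakdown_entanglement} so that every summand vanishes. Your explicit remark that the integrability hypothesis rules out a $0\cdot\infty$ indeterminacy is a nice touch, and the alternative independence argument is fine as an aside but unnecessary.
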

Thus far, we use the Gaussian assumption on $\vec Z$ to illustrate the source of entanglement. One may argue the marginal of $\vec Z$ could deviate from Gaussian-ness --- indeed, when including the randomness of its mean, we can expect potential longer tail than Gaussian. Therefore, we now relax this to general elliptical distribution --- a large distribution class  including Laplace, $t$-distribution, etc.
  \begin{theorem}[Maximal disentanglement under uncorrelated elliptical encoder]
Let \(\vec{Z}=(Z_1,\dots,Z_d)'\) be a continuous random vector from an elliptical distribution with density $h(Z)\propto \mathcal{K}(\vec Z' \Omega \vec Z)$ with $\Omega$ positive definite matrix, mean $ \mathbb{E}Z_k= \mu_k$ and variance $ \Cov(Z_k)=\sigma^2_k<\infty$ for each element. Suppose \(f^*\) is a neural network  such that \(\partial f^* / \partial z_k\) is continuous almost everywhere.
   If \(\Cov(Z_k,Z_{k'})=0\) for \(k \neq k'\), then $\textup{Cross-Cov}[Z_k, f^*_q( \vec{Z}^{\setminus k}) ] = 0$ for all $k=1,\ldots, d$ and $q=1,\ldots, p$.
\end{theorem}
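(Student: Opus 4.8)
The plan is to reduce the cross-covariance to a conditional expectation and then exploit the reflection symmetry that zero correlation forces on the elliptical density, so that neither a Stein-type identity nor differentiability of $f^*$ is needed --- only integrability of $f^*_q(\vec Z^{\setminus k})$.

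First I would pin down the shape matrix. Write $\vec W = \vec Z - \mu$. For an elliptical law with finite second moments, the stochastic representation $\vec Z \stackrel{d}{=} \mu + R\Lambda\vec U$ (with $\vec U$ uniform on the sphere, $\Lambda\Lambda' \propto \Omega^{-1}$, $\E R^2 < \infty$) gives the standard identity $\Cov(\vec Z) = c\,\Omega^{-1}$ for a constant $c>0$ depending only on $\mathcal K$ and $d$; this is exactly where the hypothesis $\sigma_k^2 < \infty$ is used. Hence $\Cov(Z_k,Z_{k'})=0$ for all $k\neq k'$ forces $\Omega^{-1}$, and therefore $\Omega$, to be diagonal, say $\Omega=\mathrm{diag}(\omega_1,\dots,\omega_d)$ with each $\omega_l>0$. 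Consequently the density of $\vec W$ is proportional to $\mathcal K\!\big(\sum_{l=1}^d \omega_l w_l^2\big)$, which, holding the other coordinates fixed, is an even function of each $w_l$ separately.

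Next I would rewrite the target quantity. Since $\vec Z^{\setminus k}$ replaces $Z_k$ by the constant $c_k$, the value $f^*_q(\vec Z^{\setminus k})$ depends only on $Z_{-k} := (Z_l)_{l\neq k}$, equivalently on $\vec W_{-k} := (W_l)_{l\neq k}$; write $\psi(\vec W_{-k}) := f^*_q(\vec Z^{\setminus k})$, which is measurable (a.e. continuity of $f^*$ suffices) and which I assume integrable against $W_k$ (Cauchy--Schwarz with the finite variances handles this once $f^*_q(\vec Z^{\setminus k})$ is square-integrable, a mild growth condition on the network). Then, after recentering,
\[
\text{Cross-Cov}[Z_k, f^*_q(\vec Z^{\setminus k})] = \E\big[(Z_k-\mu_k)\,\psi(\vec W_{-k})\big] = \E\big[W_k\,\psi(\vec W_{-k})\big] = \E_{\vec W_{-k}}\Big[\psi(\vec W_{-k})\,\E[W_k \mid \vec W_{-k}]\Big],
\]
using the tower property in the last step. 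Because $\Omega$ is diagonal, the conditional density of $W_k$ given $\vec W_{-k}=w_{-k}$ is proportional, as a function of $w_k$, to $\mathcal K\!\big(\omega_k w_k^2 + \sum_{l\neq k}\omega_l w_l^2\big)$, which is even in $w_k$; hence the conditional law of $W_k$ is symmetric about $0$, so $\E[W_k\mid \vec W_{-k}]=0$ a.s. (finite $\sigma_k^2$ makes the conditional mean well-defined). Substituting back yields $\text{Cross-Cov}[Z_k, f^*_q(\vec Z^{\setminus k})]=0$ for every $k$ and every $q$.

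The main obstacle is the first step: justifying that vanishing correlations pin $\Omega$ down to be diagonal. This relies on the elliptical-family fact $\Cov(\vec Z)\propto\Omega^{-1}$, which is precisely where finite second moments enter; once it is established, the reflection symmetry of $\mathcal K(\sum_l\omega_l w_l^2)$ does all the remaining work, and --- unlike Theorem 1 --- the differentiability assumption on $f^*$ plays no role beyond ensuring measurability/integrability. A secondary, purely bookkeeping concern is checking the integrability needed for the tower-property manipulation, which Cauchy--Schwarz together with the finite variances settles under square-integrability of $f^*_q(\vec Z^{\setminus k})$. (As an alternative to the symmetry route one could invoke an elliptical version of Stein's identity to extend \eqref{eq:breakdown_entanglement} up to a scalar factor and then argue as in Corollary 1, but that demands more regularity and yields nothing extra here.)
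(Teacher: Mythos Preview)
Your argument is correct and, at its core, follows the same route as the paper: both proofs reduce the cross-covariance via the tower property to $\E\big[\psi(\vec W_{-k})\,\E[W_k\mid \vec W_{-k}]\big]$ and then argue that the inner conditional expectation vanishes for an uncorrelated elliptical law. The execution differs. You establish $\E[W_k\mid \vec W_{-k}]=0$ directly from symmetry, after first using the standard identity $\Cov(\vec Z)\propto\Omega^{-1}$ to force $\Omega$ diagonal, so that the density is even in each centred coordinate. The paper instead introduces the zero-biased distribution of \cite{goldstein1997stein} to handle the centred conditional piece and then cites \cite{owen1983class} for the elliptical conditional-mean property. Your version is more elementary and self-contained, and you are right that the differentiability hypothesis on $f^*$ plays no real role: in the paper's argument the zero-biased identity is applied to a function $g_1$ that does not depend on the variable being biased, so its derivative is identically zero and the Stein-type step collapses to the trivial fact that a centred variable has mean zero. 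The paper's presentation does keep the parallel with Theorem~1 visible, but your symmetry argument is shorter, needs less machinery, and makes the actual hypothesis being used (finite second moments to identify $\Omega$, integrability of $f^*_q(\vec Z^{\setminus k})$ for the tower step) more transparent.
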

\begin{proof}
  Consider univariate $Y_k$ with density $h$, mean $0$ and variance $1$,   \cite{goldstein1997stein}  showed that there always exists another random variable  $W_k$ with density
   \[
	 h(w_k) = \int_{w_k}^{\infty} t h(t)dt,
    \]
which is often referred to as the ``$Y_k$-zero biased'' distribution. It has the property:
  \[
	\mathbb{E}_{Y_k} Y_k g_1(Y_k) = 
	 \mathbb{E}_{W_k}[ \frac{\partial  g_1(W_k)}{\partial w_k}],
\]
which holds for any almost everywhere differentiable $g_1:\mathbb{R}\to \mathbb{R}$, such that the right hand side is finite. Naturally, this holds for $g_1(Y_k) = g_1(Y_k; \vec y ^{\setminus k})$ with any parameter $\vec y ^{\setminus k}$. Consider
\bel \label{eq:proof2}
g_1(Y_k; \vec y ^{\setminus k}) = 
 f^*_q(\sigma_1 y_1+\mu_1,\ldots,c_k, \ldots, \sigma_d y_d +\mu_d)
 \eel
  hence ${\partial  g_1(W_k)}/{\partial w_k}=0$ almost everywhere. Denoting $m_{k\mid (k)}=\mathbb{E}(Y_k \mid \vec Y^{\setminus k})$,
 \be
 \mathbb{E}_{\vec Y} Y_k 
 g_1(Y_k; \vec Y\setminus Y_k)& =  
\int \int [y_k -m_{k\mid (k)}+ m_{k\mid (k)}]  g_1(y_k; \vec y^{\setminus k}) h(y_k \mid \vec y^{\setminus k} ) h(\vec y^{\setminus k} ) \textup{d} y_k \textup{d} \vec y^{\setminus k} \\
& = 
\int  \mathbb{E}_{W_k\mid \vec Y^{\setminus k}}[ \frac{\partial  g_1(W_k; \vec y^{\setminus k})}{\partial w_k}] h(\vec y^{\setminus k} ) \textup{d} y_k \textup{d} \vec y^{\setminus k} \\
&+
\int \int m_{k\mid (k)} g_1(y_k; \vec y^{\setminus k}) h(y_k \mid \vec y^{\setminus k} ) h(\vec y^{\setminus k} ) \textup{d} y_k \textup{d} \vec y^{\setminus k} .\\
 \ee
 The first term on the right hand side is zero as proved above. The second term is also zero due to $m_{k\mid(k)}=m_k=0$, as the property of elliptical distribution \citep{owen1983class}.
  
Notice the above left hand side is equal to 
 \[
  \mathbb{E}_{\vec Y} [Y_k
 g_1(Y_k; \vec Y\setminus Y_k)] -   (\mathbb{E}_{Y_k} Y_k) [\mathbb{E}_{\vec Y}   g_1(Y_k; \vec Y\setminus Y_k) ]
 =\Cov[Y_k, g_1(Y_k; \vec Y\setminus Y_k)]
 \]
  taking $Z_k= \sigma_kY_k+\mu_k$ and \eqref{eq:proof2}, we have $\Cov[Z_k, f^*_q( \vec{Z}^{\setminus k})]=0$.
\end{proof}  
  \begin{remark}
  As we have generalized beyond Gaussian class, this means we do not need independence among $\vec Z$ to induce disentanglement.
    \end{remark}

\subsection{Preserving Information}

Intuitively, rotation and scaling do not change the amount of information stored in the encoder mean $\{ g(x_1),\ldots, g(x_n)\}$.  One can quantify the `expressiveness' via entropy, i.e. the information diversity. We now show this is preserved under bijective transform.
\begin{theorem}[Invariance of Entropy under Bijective Function]
For discrete random variable \(X\) and an bijective function \(m\),
\[                \HH\big \{ m[g(X)]\big \} = \HH\big[ g(X)\big] .\]
\end{theorem}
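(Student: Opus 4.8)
The plan is to reduce the claim to the elementary fact that the Shannon entropy of a discrete random variable depends only on the multiset of its atom probabilities, and not on the labels attached to those atoms. First I would set $Y:=g(X)$; since $X$ is discrete, $Y$ is discrete, with countable support $\{y_1,y_2,\dots\}$ and probability mass function $p_j=\PP(Y=y_j)=\PP\{g(X)=y_j\}$. By definition, $\HH[g(X)]=-\sum_j p_j\log p_j$.

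Next I would exploit the bijectivity of $m$. Because $m$ is injective, the images $m(y_1),m(y_2),\dots$ are pairwise distinct, so for each $j$ the event $\{m(Y)=m(y_j)\}$ coincides with the event $\{Y=y_j\}$; consequently $\{m(y_j)\}_j$ is precisely the support of the random variable $m[g(X)]$, and $\PP\{m(Y)=m(y_j)\}=p_j$ for every $j$. (Note that only injectivity on the support of $Y$ is actually used; surjectivity plays no role here.)

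Finally I would substitute into the definition of entropy, obtaining $\HH\{m[g(X)]\}=-\sum_j\PP\{m(Y)=m(y_j)\}\log\PP\{m(Y)=m(y_j)\}=-\sum_j p_j\log p_j=\HH[g(X)]$, which is the claim. The argument is routine; the single point worth flagging is the bookkeeping step that no two distinct atoms of $g(X)$ are collapsed by $m$ --- this is exactly where injectivity enters, and it is the reason the identity would fail (entropy could only decrease, by a grouping inequality) for a non-injective transformation. Since the projection layer $m$ used in Proj-VAE is built from the invertible matrix $L^{-1}$ together with a fixed shift, it is bijective and the hypothesis of the theorem is met.
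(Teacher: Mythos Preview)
Your argument is correct and complete: you observe that Shannon entropy depends only on the multiset of atom probabilities, and that an injective map preserves this multiset because distinct atoms of $g(X)$ are sent to distinct atoms of $m[g(X)]$ with the same masses. This is a direct, elementary route.

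The paper takes a different path. It decomposes the joint entropy $\HH\{m[g(X)],g(X)\}$ two ways via the chain rule, uses that $m$ is deterministic to kill $\HH\{m[g(X)]\mid g(X)\}$, and thereby obtains the inequality $\HH[g(X)]\ge \HH\{m[g(X)]\}$. It then invokes bijectivity to run the same argument with $m^{-1}$ and reverse the inequality. Compared with your approach, the paper's proof is slightly more abstract and uses surjectivity (through the existence of $m^{-1}$) in an essential way, whereas you correctly note that injectivity on the support of $g(X)$ is all that is really needed. Your argument is shorter and pinpoints the minimal hypothesis; the paper's argument has the virtue of showcasing the standard information-theoretic machinery (chain rule plus the fact that deterministic processing cannot increase entropy), which is the template one would reuse for related data-processing statements.
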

\begin{proof}
  Consider the joint entropy, decomposed as the sum of marginal and conditional entropies 
 $$\HH \{m[g(X)],g(X) \}=\mathrm {H} [g(X)]+\mathrm {H}\{ m[g(X)]|g(X)\}=\mathrm {H} \{  m[g(X)]\}+ \mathrm {H} \{ g(X)|m[g(X)]\}.$$

 Since \(m\) is a deterministic function,
$\mathrm {H}\{ m[g(X)]|g(X)\}=0$ and 
  thus 
\(\mathrm {H} [g(X)]=\mathrm {H} \{  m[g(X)]\}+ \mathrm {H} \{ g(X)|m[g(X)]\},\)
so \(\mathrm {H} [g(X)] \geq \HH\big \{ m[g(X)]\big \} .\)

Since \(m\) is bijective, then there exits an inverse function \(m^{-1}\) such that \(g(X)=m^{-1}\{m[g(X)]\}\), then applying the previous result yields
\(\mathrm {H} [g(X)] \leq \HH\big \{ m[g(X)]\big \} \). Combining two pieces yields the result.
\end{proof}

\begin{remark}
In our case, \( m[g(x_i)] = L' [g(x_i)-  \mu_g ]+ \mu_g\), where \(L\) is a \(d \times d\) invertible matrix; hence this is an bijective function.
\end{remark}

Since zero correlation constraint is directly accommodated when projecting $g$ into $g^*$, all the previous layers in $g$ remain fully flexible. This is fundamentally different from correlation-based regularization that applies on the whole encoder; the practical difference in performance will also be illustrated in the data experiments.

\section{Data Experiments}

We now use  common benchmark datasets to demonstrate good performance of Proj-VAE --- to clarify, we do {\em not} claim always superior performance than competitors --- as the evaluation of disentanglement can be quite subjective. Rather, we want to emphasize on the tuning-free property of Proj-VAE, with good reconstruction accuracy. This makes it an appealing tool for almost automatic dimension reduction.


\subsection{Reconstruction Accuracy}

To illustrate that projection has  almost no impact on expressiveness, we use the images of Frey Face. Since this is a small dataset with $n=1,900$, it is easy to train most models to their minimal loss --- this is important for a fair comparison on the reconstruction accuracy. For simplicity, we run four models with $d=3$ latent dimension, and only one hidden layer in encoder and decoder with $256$ units.

Figure~\ref{fig:freyface} shows that the reconstructions from Proj-VAE are almost indistinguishable to the canonical VAE. On the other hand, the correlation-based regularization  with penalty $\gamma \|\Sigma_g -I_d\|_1$ and $\gamma=100$,  severely impacts the reconstruction. This demonstrates the sharp distinction between projection and regularization.

\begin{figure}[h]
     \centering
         \includegraphics[width=.7\linewidth]{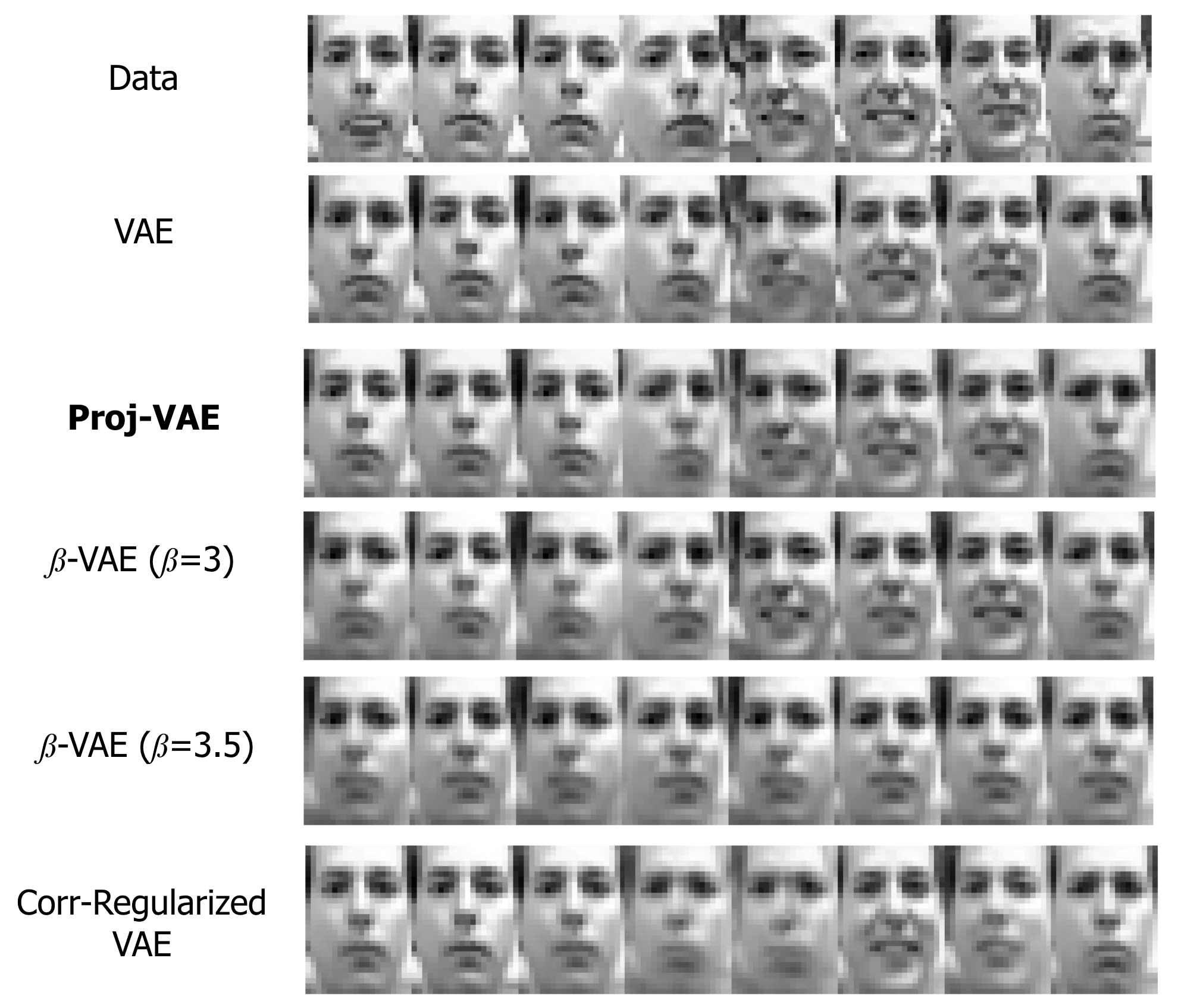}
            \caption{Comparison of the reconstructed images to the original data (first row),
             using canonical VAE, Proj-VAE, $\beta$-VAE and Correlation-regularized VAE. Proj-VAE shows almost identical reconstruction to the canonical VAE (with average binary cross-entropy errors $346.2$ and $344.0$, respectively);  correlation-based regularization (last row) severely impacts the expressiveness (with error $350.1$); $\beta$-VAE requires a careful tuning as slightly increased $\beta$ can result in loss of feature (mouth).
             \label{fig:freyface}
             }
\end{figure}

To show the effectiveness of removing correlation via projection, Figure~\ref{fig:decor} plots the correlation matrices $\text{Corr} [ g^*_k(x_i), g^*_{k'}(x_i)]$ on the scale of $\log_{10}$ of absolute value. The results are compared against the $\text{Corr} [ g_k(x_i), g_{k'}(x_i)]$ in correlation-regularized VAE. The projection achieves precision around $10^{-7}$, which is the float precision in typical GPU-based program; whereas penalty with $\gamma=100$ on the correlation can only yield $10^{-2}$. Further increasing $\gamma$ in regularization is not practical as it incurs more reconstruction error.

\begin{figure}[h]
     \centering
     \begin{subfigure}[b]{.48\linewidth}
         \centering
         \includegraphics[width=.6\linewidth]{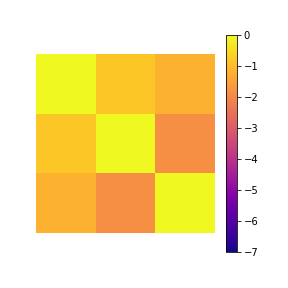}
         \caption{Correlation-based Regularization}
     \end{subfigure}
         \begin{subfigure}[b]{.48\linewidth}
         \centering
         \includegraphics[width=.6\linewidth]{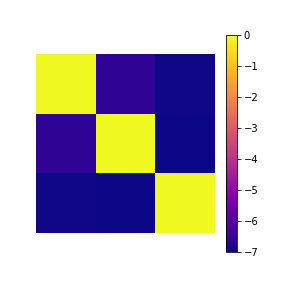}
         \caption{Projection}
     \end{subfigure}
             \caption{ Comparing the numeric values of correlation matrices ( on the scale of $\log_{10}|\text{Corr|}$) in the Gaussian means for latent $z$.
               \label{fig:decor}
}
\end{figure}

\subsection{Disentanglement Performance}

To demonstrate the disentanglement performance in a relatively large dataset, we use the chairs dataset  \citep{aubry2014seeing}, which contains $n=86,366$ images of chair CAD models. We use $d=16$ and the same architecture of convolutional network as described in \cite{higgins2017beta}.

We select three factors and present them in Figure~\ref{fig:chairs}. In Proj-VAE, the azimuth rotation and backrest height of the chairs are clearly separated (panel a). In comparison, the canonical VAE (panel b) and  $\beta$-VAE (panel c, $\beta=5$) have those two factors mixed together, although we expect $\beta$-VAE can disentangle them with larger $\beta$.
\begin{figure}[h]
     \centering
 \begin{minipage}[t]{.5\textwidth}
     \begin{subfigure}[b]{1\linewidth}
         \centering
         \includegraphics[width=2.6in,height=0.7in]{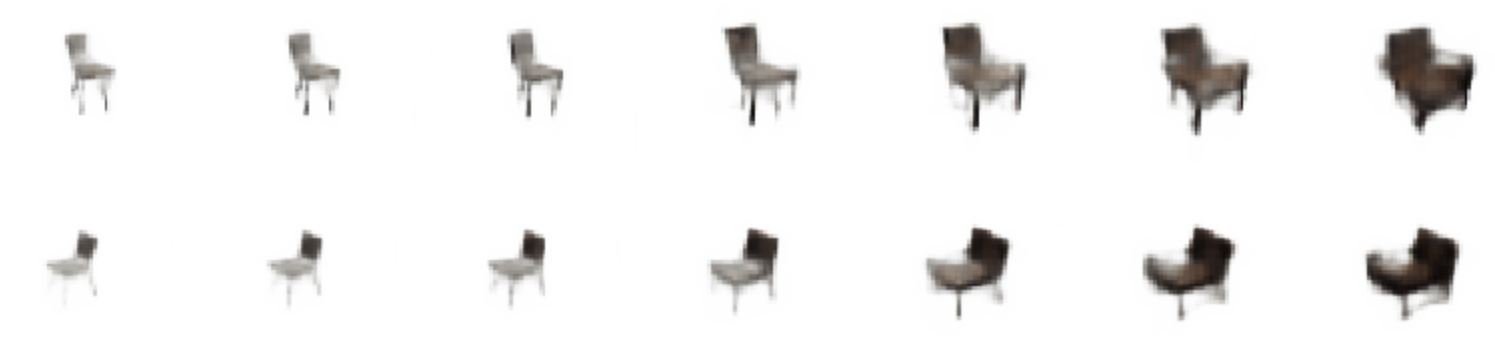}
         \caption*{i. Scale}
     \end{subfigure}
         \begin{subfigure}[b]{1\linewidth}
         \centering
         \includegraphics[width=2.6in,height=0.7in]{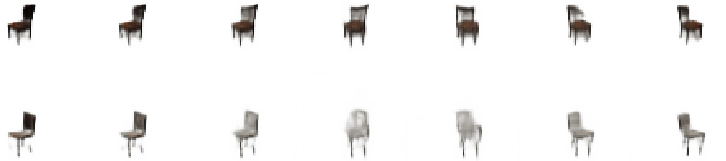}
         \caption*{ii. Rotation}
     \end{subfigure}
               \begin{subfigure}[b]{1\linewidth}
         \centering
         \includegraphics[width=2.6in,height=0.7in]{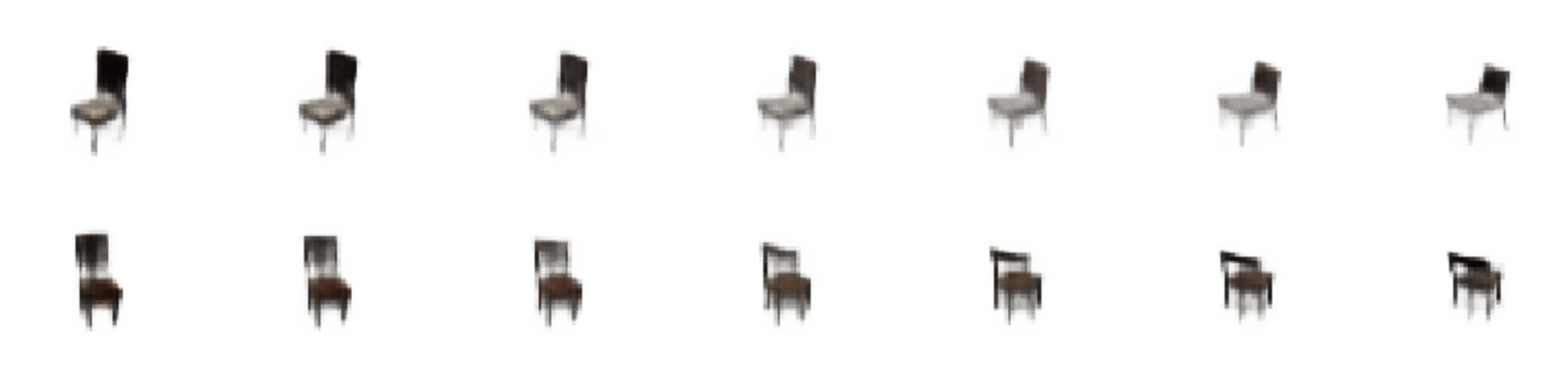}
         \caption*{iii. Backrest height}
     \end{subfigure}
    \subcaption{Disentangled representation learned by Proj-VAE.}
\end{minipage}
\begin{minipage}[t]{.49\textwidth}
         \begin{subfigure}[b]{1\linewidth}
         \centering
         \includegraphics[width=2.6in,height=0.7in]{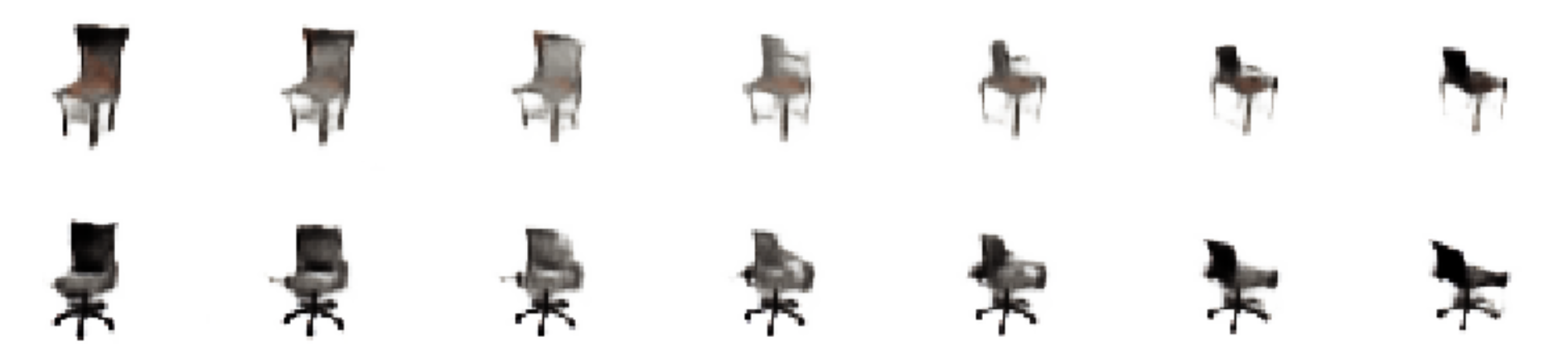}
         \caption*{Rotation and Backrest height}
     \end{subfigure}
                   \subcaption{Two factors are entangled in the canonical VAE.}
                    \vskip 0.4in
              \begin{subfigure}[b]{1\linewidth}
         \centering
         \includegraphics[width=2.6in,height=0.7in]{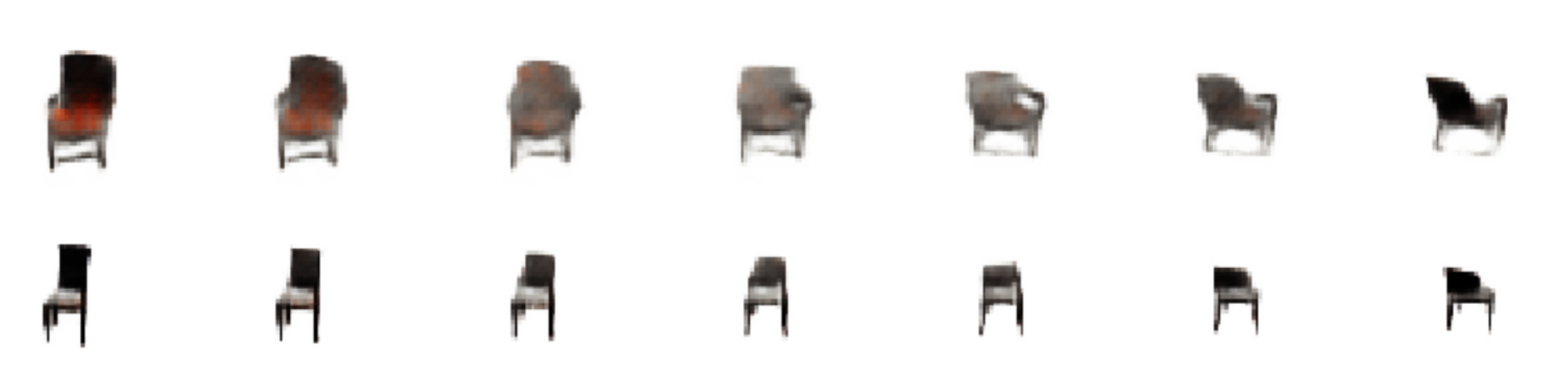}
         \caption*{Rotation and Backrest height}
                       \subcaption{Two factors are still entangled in the $\beta$-VAE when $\beta=5$.}
     \end{subfigure}
\end{minipage}
             \caption{  Generated chair images using different VAEs: each row of the data are simulated with one $z_{.,k}$ traversing
 from $-3$ to $3$ (the approximate learned range), while keeping the other factors fixed. Proj-VAE learns disentangled representation of rotation and backrest height, compared to canonical VAE (panel b) and $\beta$-VAE with $\beta=5$.
        \label{fig:chairs}}
\end{figure}

We also apply Proj-VAE on a large data CelebA \citep{liu2015deep}, which includes $n=202,599$ celebrity images over 3 color channels. We choose $d=32$ as selected by $\beta$-VAE \citep{higgins2017beta} and FactorVAE \citep{kim2018disentangling}. Our approach learns all the factors previously reported, such as skin color, image saturation, age, gender, rotation, emotion and fringe (discovered in $\beta$-VAE with $\beta=250$ and FactorVAE with $\gamma=6.4$). Interestingly, It also discovers several new factors, such as the presence of sunglasses, background brightness and color tint. Again, existing approaches can find the same factors under appropriate tuning --- for example, sunglasses factor can be recovered in $\beta$-VAE with much smaller $\beta=10$ (this seems coherent with the fact that those variations are on a finer scale); our advantage is that there is no tuning required, hence it takes only one time of training.

\begin{figure}[h]
      \centering
      \begin{subfigure}[b]{0.329\linewidth}
          \centering
          \includegraphics[width=\linewidth,trim={6cm 3cm 5cm 3cm},clip]{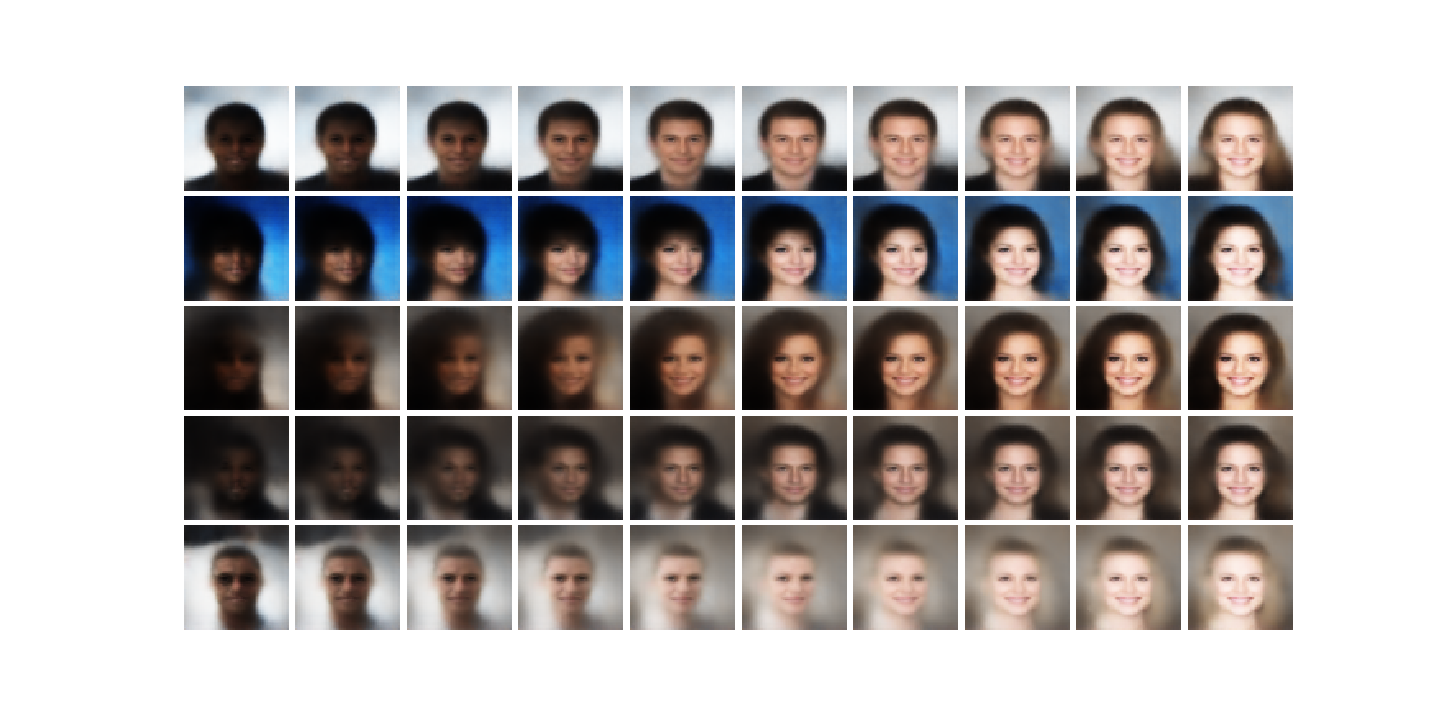}
          \caption{Skin Color}
      \end{subfigure}
     \begin{subfigure}[b]{0.329\linewidth}
          \centering
          \includegraphics[width=\linewidth,trim={6cm 3cm 5cm 3cm},clip]{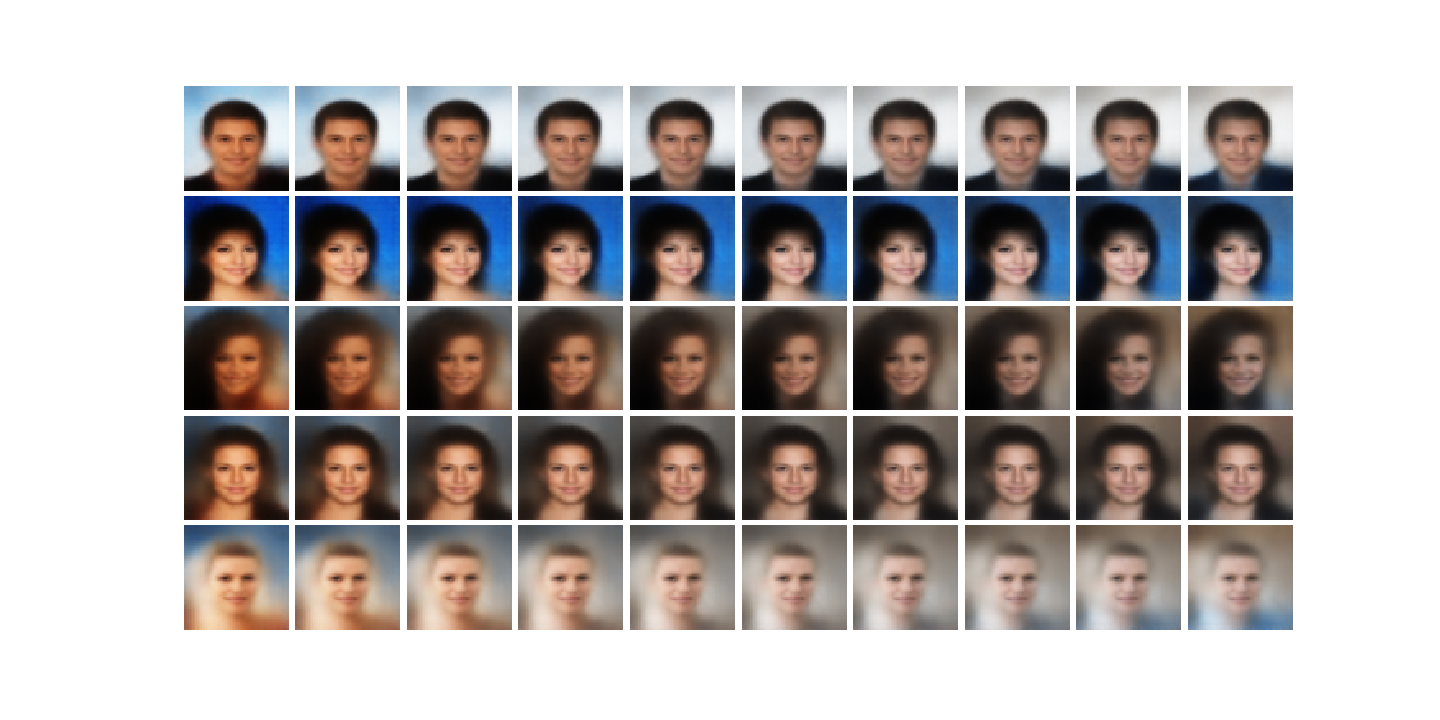}
          \caption{Image Saturation}
      \end{subfigure}
     \begin{subfigure}[b]{0.329\linewidth}
          \centering
          \includegraphics[width=\linewidth,trim={6cm 3cm 5cm 3cm},clip]{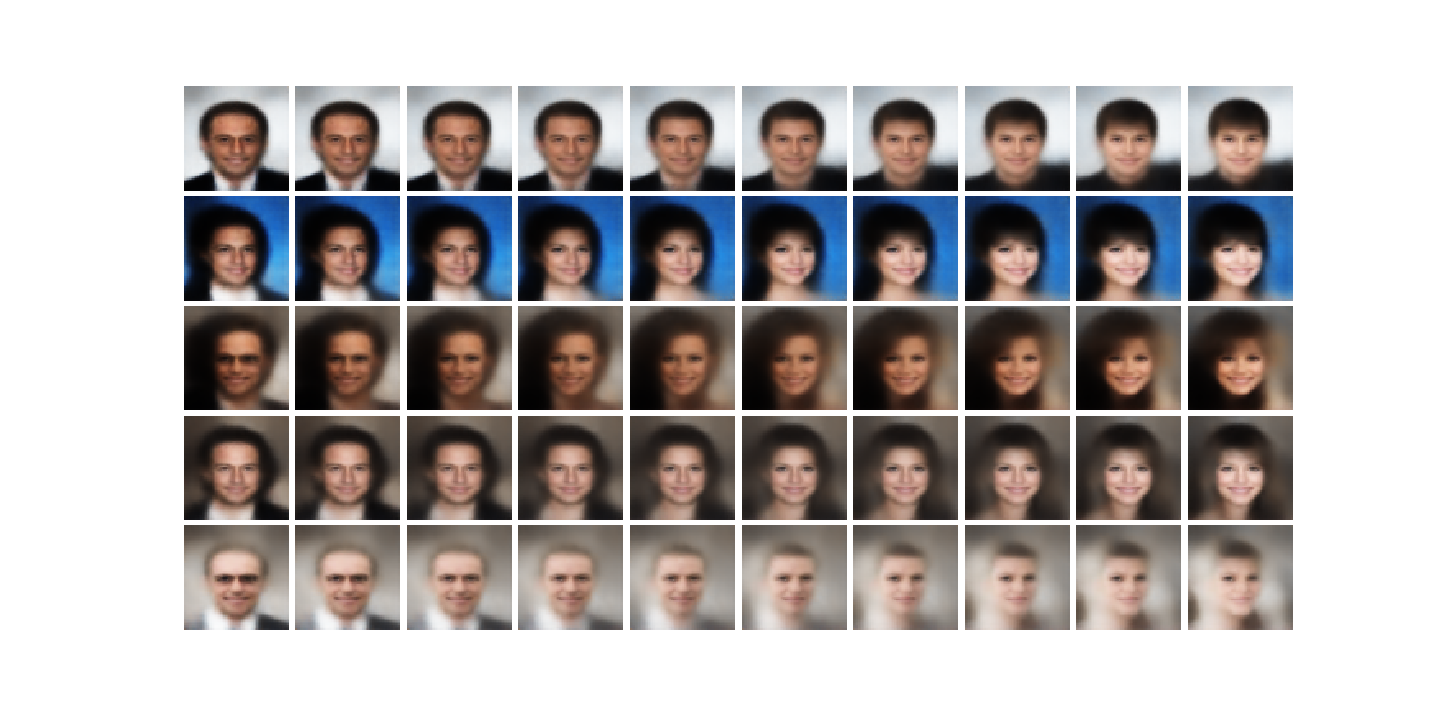}
          \caption{Gender}
      \end{subfigure}
     \begin{subfigure}[b]{0.329\linewidth}
          \centering
          \includegraphics[width=\linewidth,trim={6cm 3cm 5cm 3cm},clip]{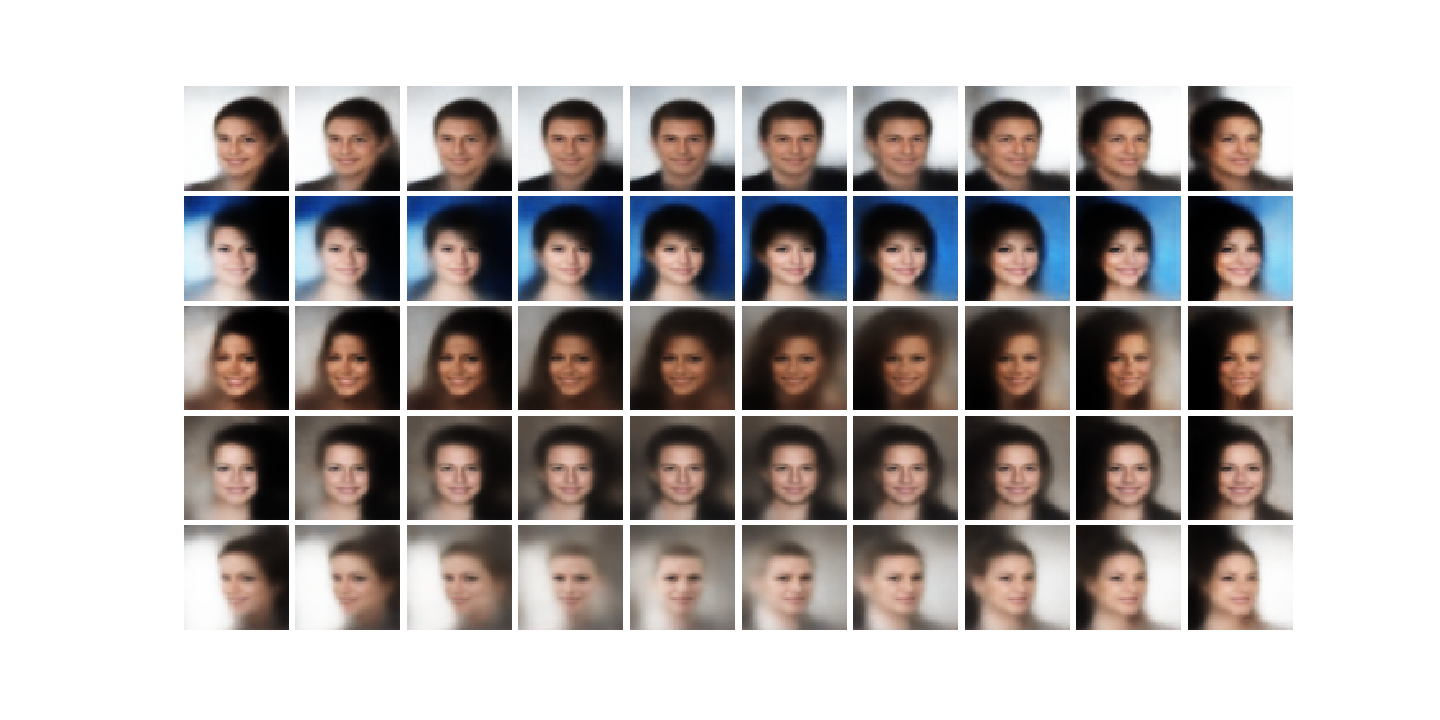}
          \caption{Rotation}
      \end{subfigure}
     \begin{subfigure}[b]{0.329\linewidth}
          \centering
          \includegraphics[width=\linewidth,trim={6cm 3cm 5cm 3cm},clip]{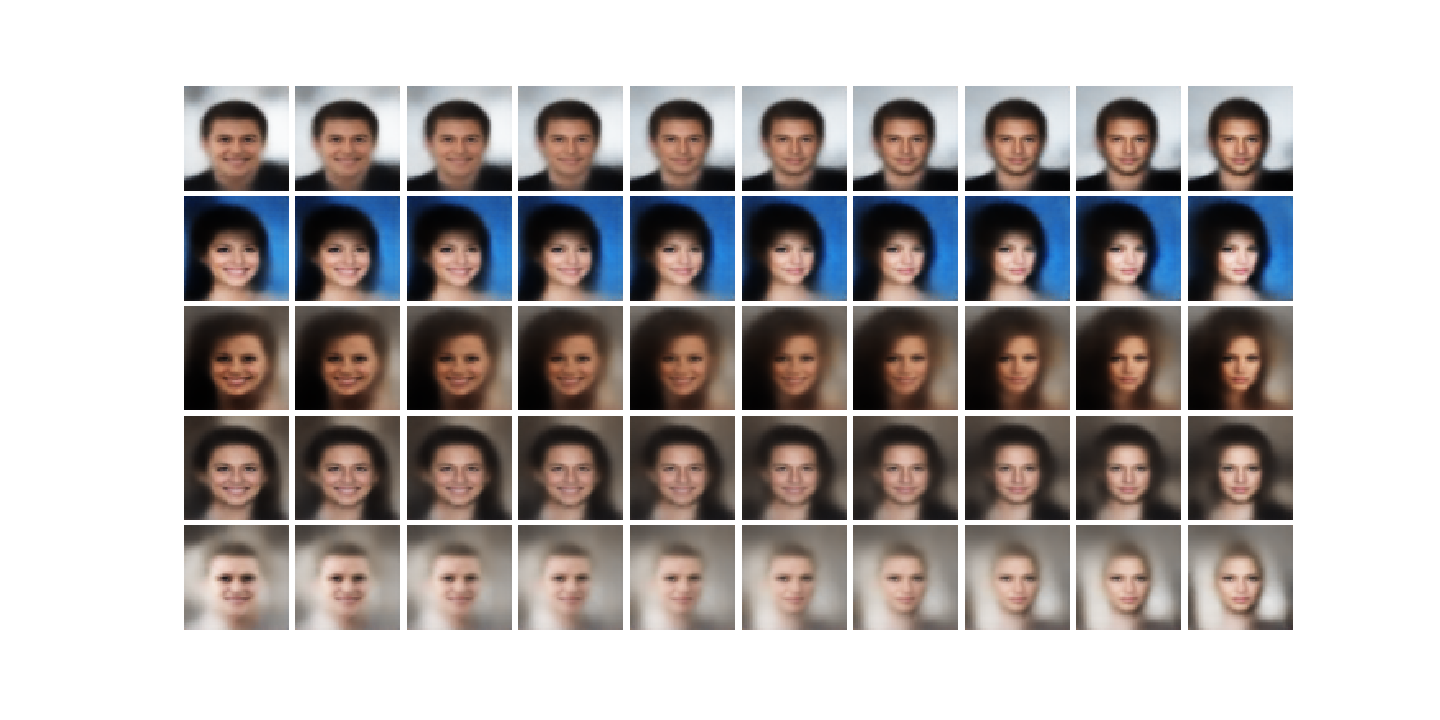}
          \caption{Emotion(Smile-Neutral)}
      \end{subfigure}
     \begin{subfigure}[b]{0.329\linewidth}
          \centering
          \includegraphics[width=\linewidth,trim={6cm 3cm 5cm 3cm},clip]{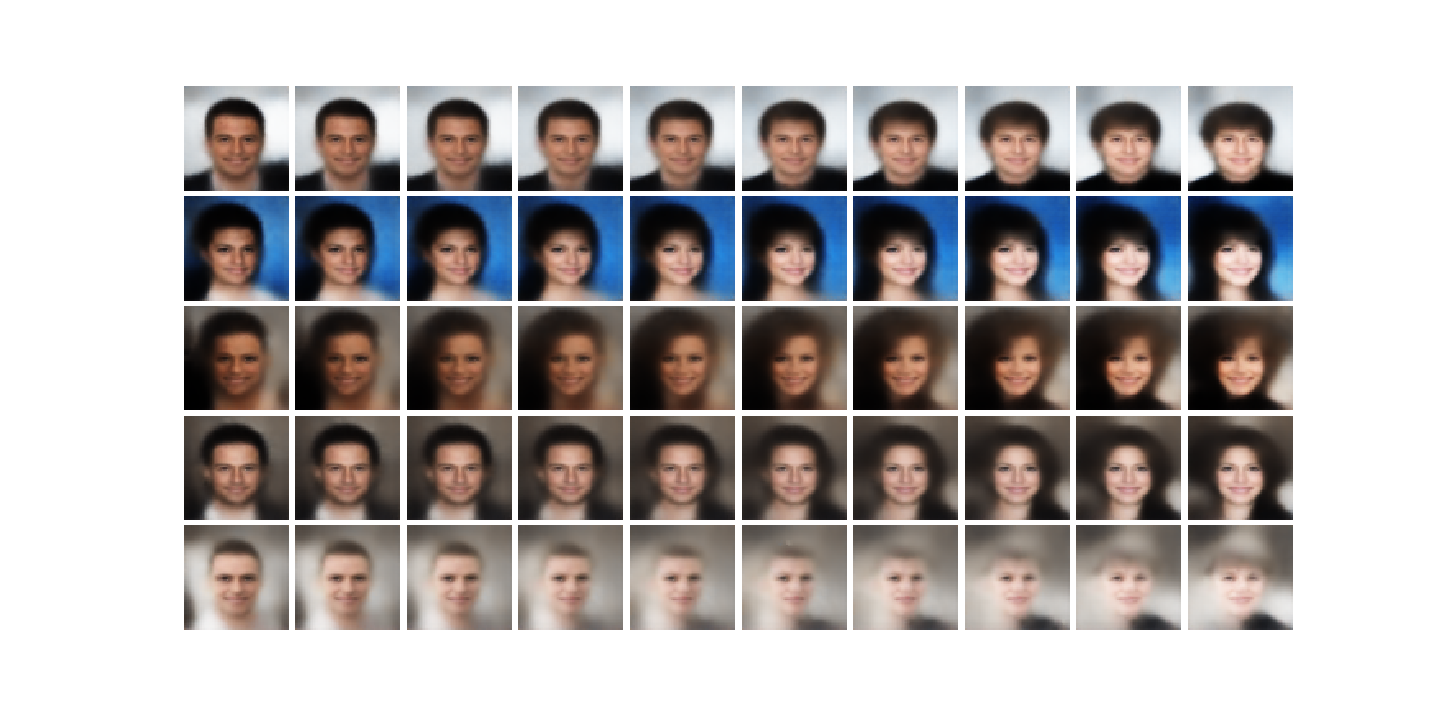}
          \caption{Hair(Fringe)}
      \end{subfigure} 
     \begin{subfigure}[b]{0.329\linewidth}
          \centering
          \includegraphics[width=\linewidth,trim={6cm 3cm 5cm 3cm},clip]{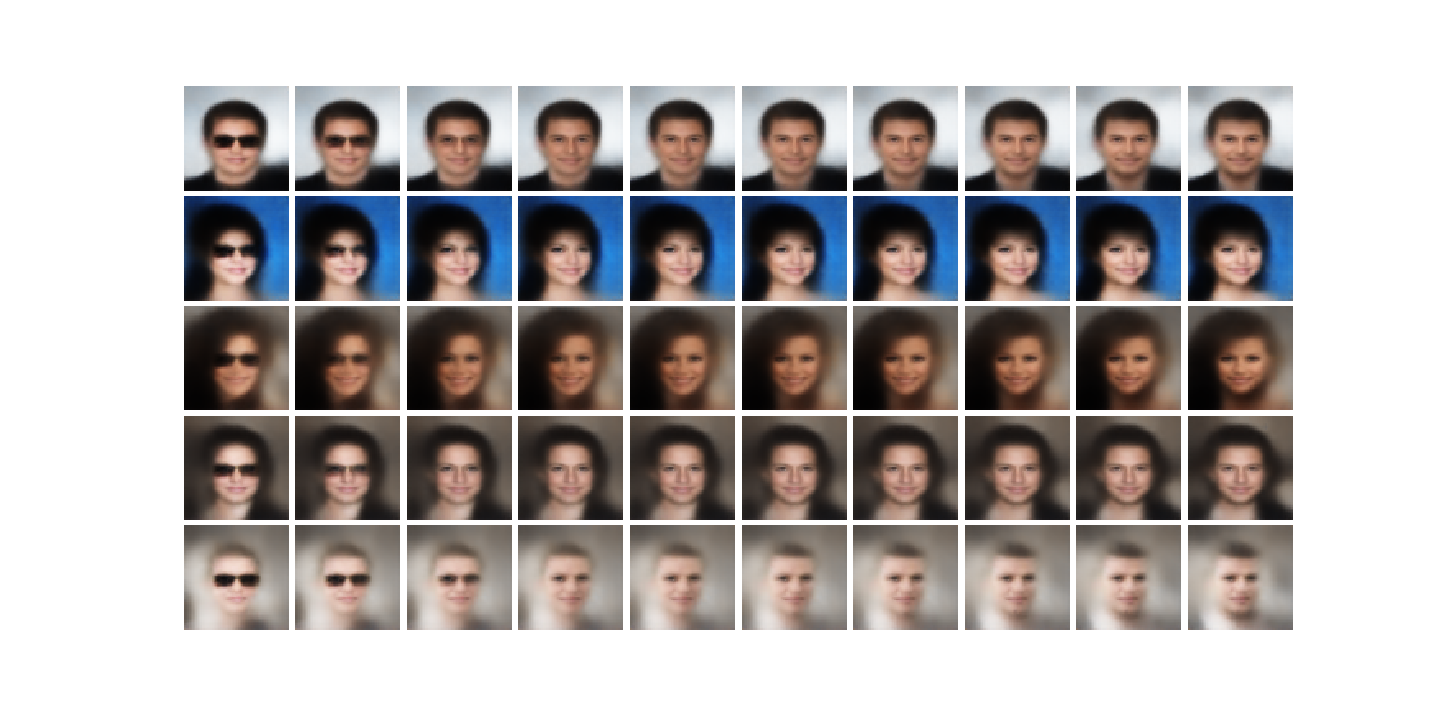}
          \caption{Sunglasses}
      \end{subfigure}  
     \begin{subfigure}[b]{0.329\linewidth}
          \centering
          \includegraphics[width=\linewidth,trim={6cm 3cm 5cm 3cm},clip]{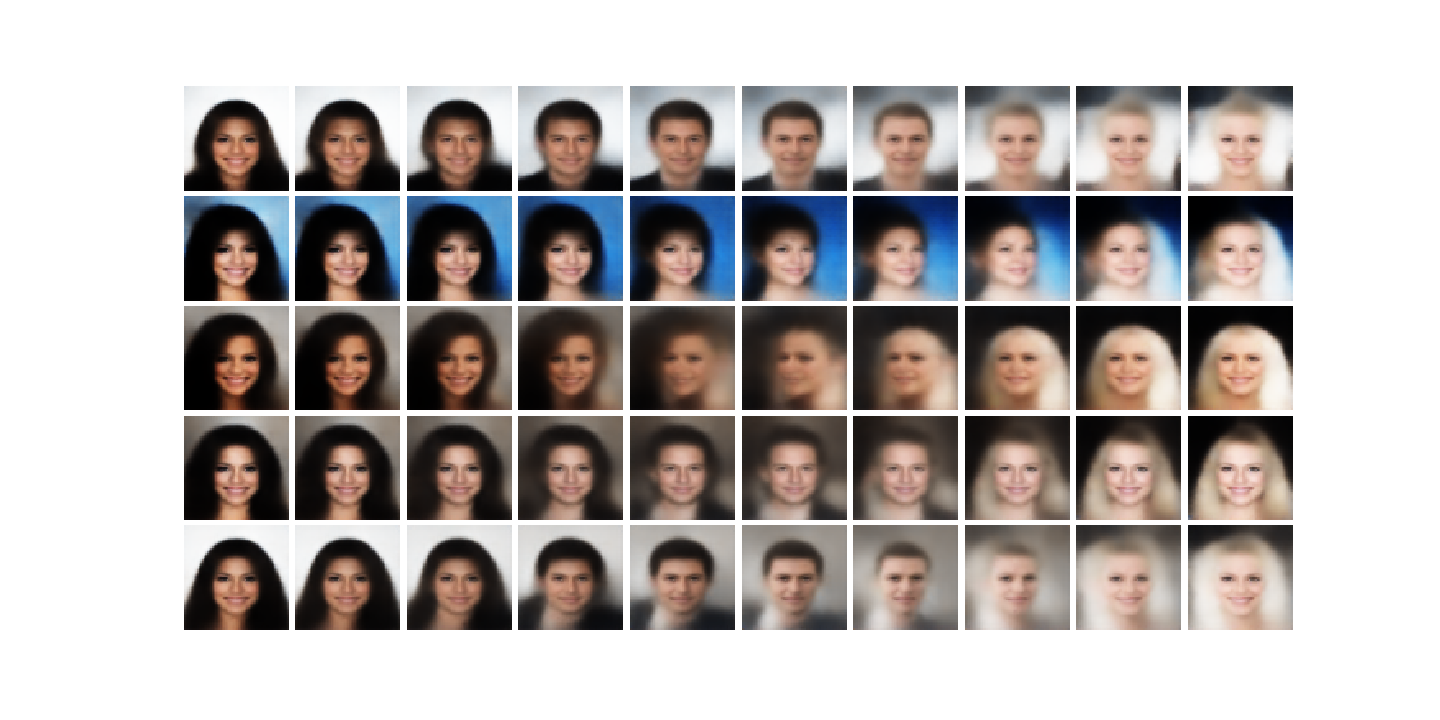}
          \caption{Hair Color(Black-White)}
      \end{subfigure}
     \begin{subfigure}[b]{0.329\linewidth}
          \centering
          \includegraphics[width=\linewidth,trim={6cm 3cm 5cm 3cm},clip]{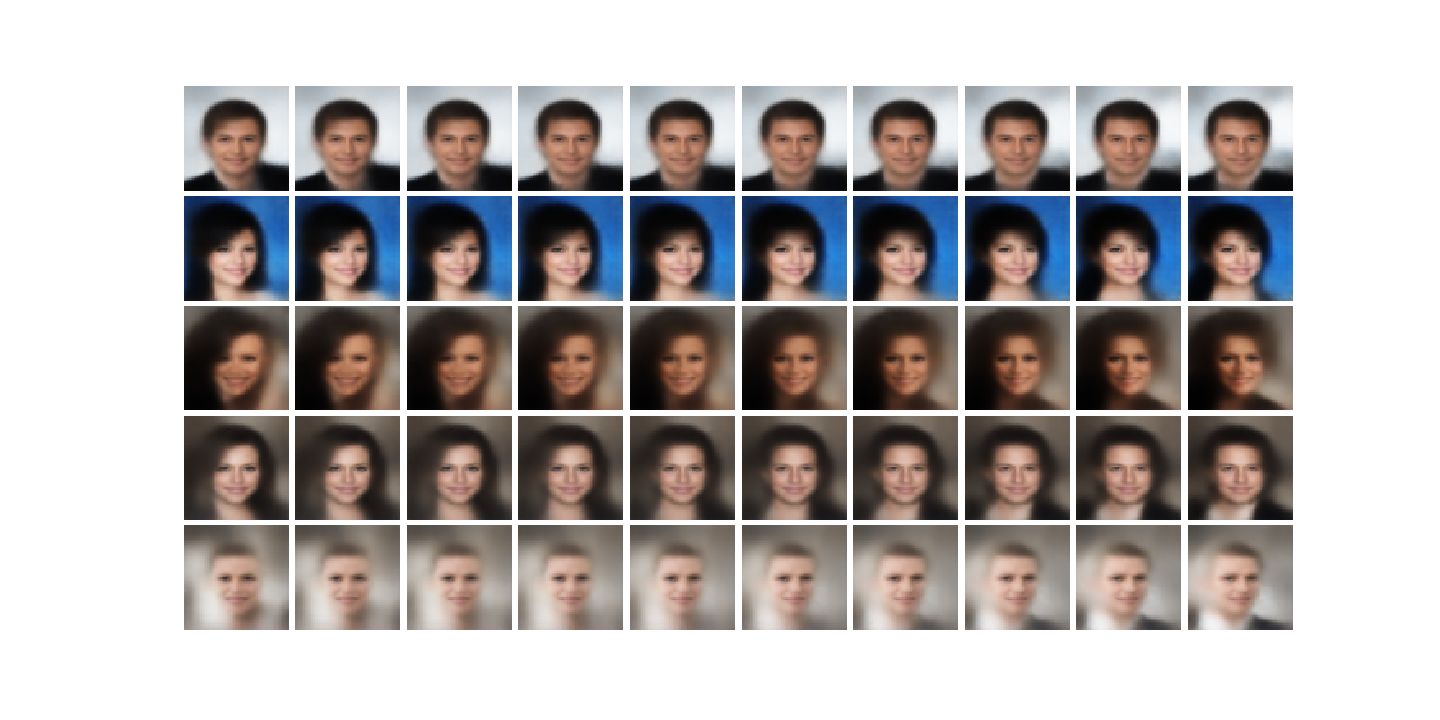}
          \caption{Hair (Side Part)}
          \label{fig:sidepart}
          \end{subfigure} 
   \begin{subfigure}[b]{0.329\linewidth}
          \centering
          \includegraphics[width=\linewidth,trim={6cm 3cm 5cm 3cm},clip]{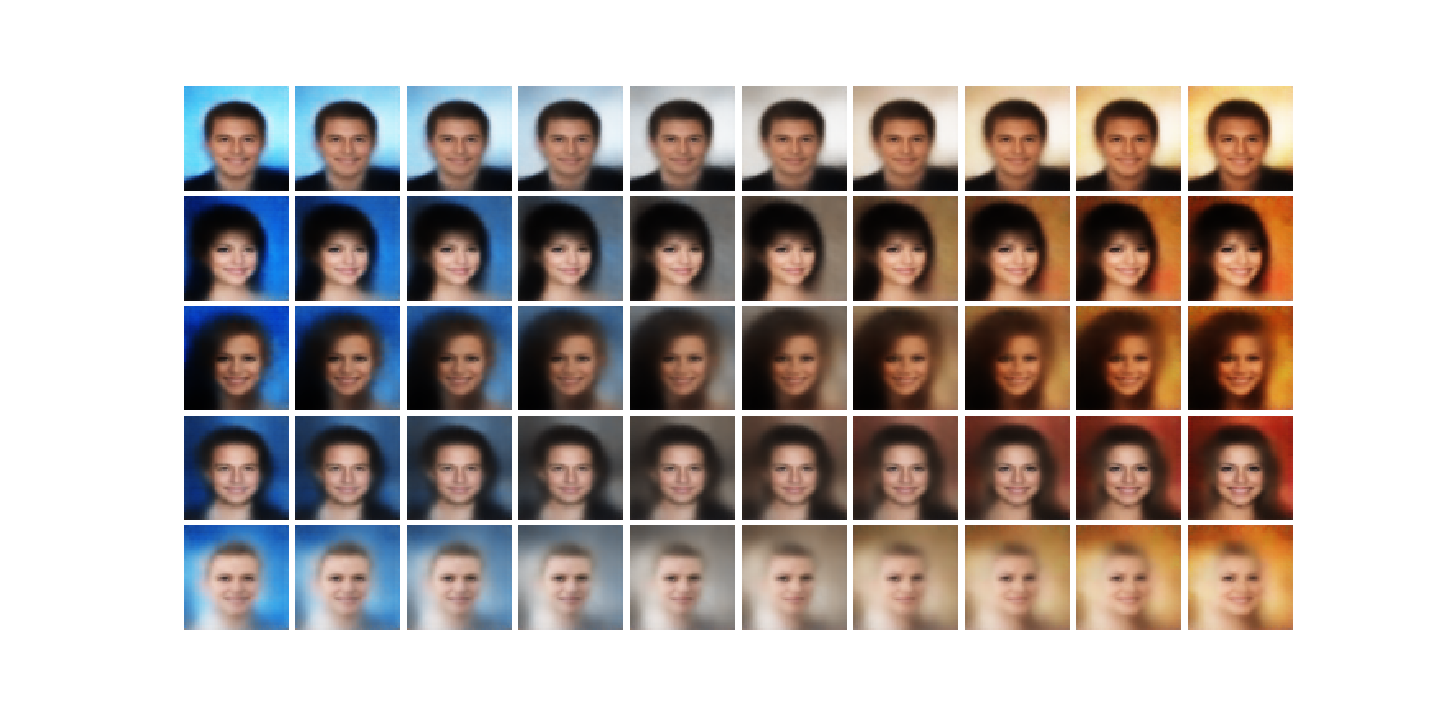}
          \caption{Background Tint (Red-Blue)}
          \label{fig:backcolor}
      \end{subfigure}
          \begin{subfigure}[b]{0.329\linewidth}
          \centering
          \includegraphics[width=\linewidth,trim={6cm 3cm 5cm 3cm},clip]{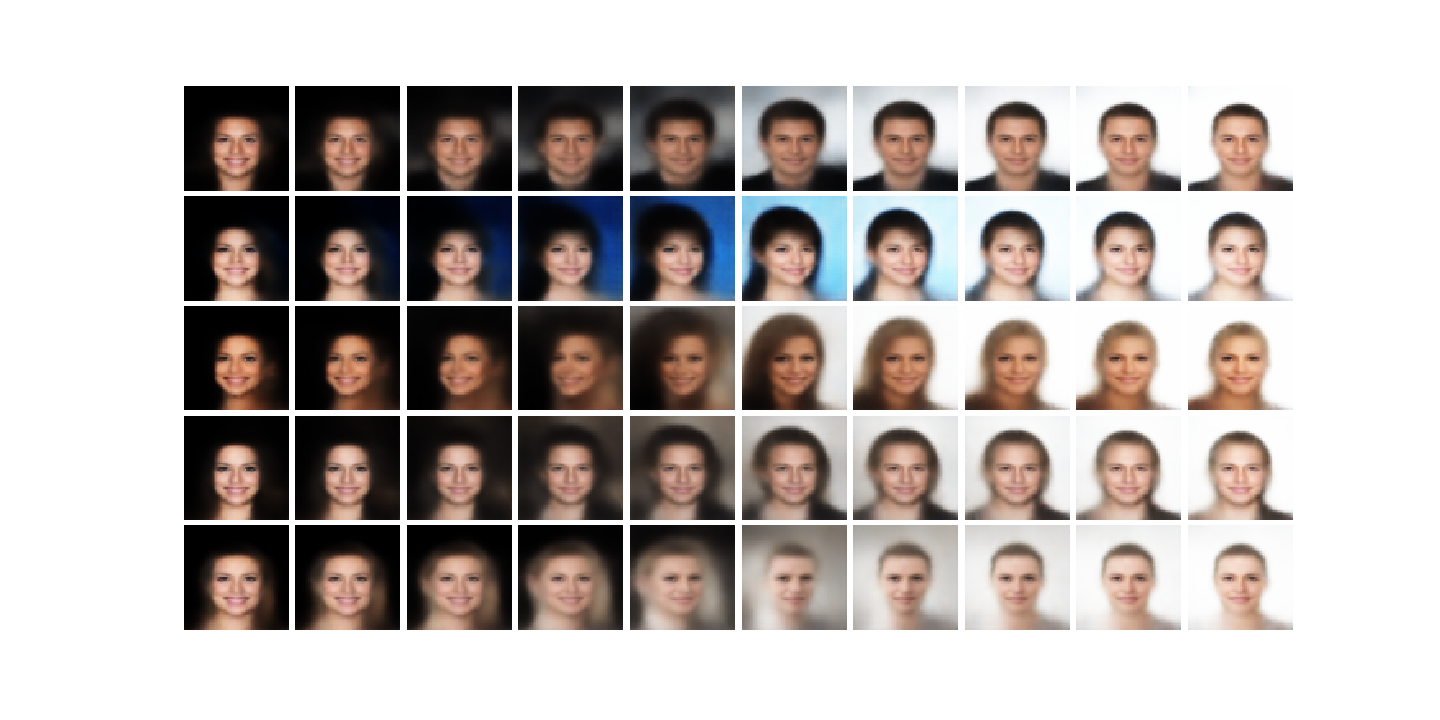}
          \caption{Background Brightness}
          \label{fig:brightness}
      \end{subfigure}
         \caption{ Disentangled factors found by Proj-VAE in CelebA dataset.
         \label{fig:celeba} }
 \end{figure}

\section{Discussion}

We demonstrate a simple projection in Proj-VAE can lead to surprisingly good
disentangling performance, bypassing the need of heuristic tuning. This
idea has an interesting support from the classic statistical theory. For concise exposition, we leave out several interesting extensions --- for example, what is the ideal number of latent dimensions $d$? This problem has recently been considered by \cite{kim2019relevance} via a prior shrinkage on $d$. In this article, we choose a conservatively small $d$, such that most of the factors can be interpretable. On the other hand, we did experiment with much larger $d$ (e.g. $d=5$ in Frey Face data, $d=32$ in chairs data) and the model seems to further capture variations in a smaller subset of data. To accommodate this, it appears intuitive to consider a mixture of Gaussians with an alternative projection scheme to remove correlation, for example, the Factor-VAE\citep{kim2018disentangling} takes the marginal distribution $q(z)$ into account which involves mixtures with a large number of components rather than the Gaussian, leading the model to learn more informative disentangled latent representations; 
Furthermore,
it remains to see if the disentanglement theory holds for mixture, and how to derive a solution as simple as Gaussian Proj-VAE.


\medskip
\small
 \bibliography{references.bib}
\end{document}